\providecommand{\tabularnewline}{\\}
\theoremstyle{plain}
\newtheorem{thm}{\protect\theoremname}
  \theoremstyle{plain}
  \newtheorem{prop}[thm]{\protect\propositionname}
\title{Dual Discriminator Generative Adversarial Nets}
\author{
  Tu Dinh Nguyen, Trung Le, Hung Vu, Dinh Phung\\
  Centre for Pattern Recognition and Data Analytics\\
  Deakin University, Australia\\
  \texttt{\{tu.nguyen,trung.l,hungv,dinh.phung\}@deakin.edu.au} \\
}
\definecolor{header_color}{rgb}{0.74,0.88,0.91}
\definecolor{even_color}{rgb}{0.9,0.9,0.9}
\definecolor{subheader_color}{rgb}{0.85,0.93,0.95}
\definecolor{childheader_color}{rgb}{1.0,0.93,0.87}
\definecolor{ccolor_best}{rgb}{1.0,0.9,0.9}
\definecolor{ccolor_wrong}{rgb}{1.0,0.85,0.85}
\declaretheoremstyle[%
  spaceabove=-6pt,%
  spacebelow=6pt,%
  headfont=\normalfont\itshape,%
  postheadspace=1em,%
  qed=\qedsymbol%
]{mystyle}
  \providecommand{\propositionname}{Proposition}
\providecommand{\theoremname}{Theorem}
\begin{document}

\maketitle
\begin{abstract}
We propose in this paper a novel approach to tackle the problem
of mode collapse encountered in generative adversarial network (GAN).
Our idea is intuitive but proven to be very effective, especially
in addressing some key limitations of GAN. In essence, it combines
the Kullback-Leibler (KL) and reverse KL divergences into a unified
objective function, thus it exploits the complementary statistical
properties from these divergences to effectively diversify the estimated
density in capturing multi-modes. We term our method \emph{dual discriminator
generative adversarial nets} (D2GAN) which, unlike GAN, has \emph{two}
discriminators; and together with a generator, it also has the analogy
of a minimax game, wherein a discriminator rewards high scores for
samples from data distribution whilst another discriminator, conversely,
favoring data from the generator, and the generator produces data
to fool both two discriminators. We develop theoretical analysis to
show that, given the maximal discriminators, optimizing the generator
of D2GAN reduces to minimizing both KL and reverse KL divergences
between data distribution and the distribution induced from the data
generated by the generator, hence effectively avoiding the mode collapsing
problem. We conduct extensive experiments on synthetic and real-world
large-scale datasets (MNIST, CIFAR-10, STL-10, ImageNet), where we
have made our best effort to compare our D2GAN with the latest state-of-the-art
GAN's variants in comprehensive qualitative and quantitative evaluations.
The experimental results demonstrate the competitive and superior
performance of our approach in generating good quality and diverse
samples over baselines, and the capability of our method to scale
up to ImageNet database.

\end{abstract}
\newcommand{\sidenote}[1]{\marginpar{\small \emph{\color{Medium}#1}}}

\global\long\def\se{\hat{\text{se}}}

\global\long\def\interior{\text{int}}

\global\long\def\boundary{\text{bd}}

\global\long\def\ML{\textsf{ML}}

\global\long\def\GML{\mathsf{GML}}

\global\long\def\HMM{\mathsf{HMM}}

\global\long\def\support{\text{supp}}

\global\long\def\new{\text{*}}

\global\long\def\stir{\text{Stirl}}

\global\long\def\mA{\mathcal{A}}

\global\long\def\mB{\mathcal{B}}

\global\long\def\expect{\mathbb{E}}

\global\long\def\mF{\mathcal{F}}

\global\long\def\mK{\mathcal{K}}

\global\long\def\mH{\mathcal{H}}

\global\long\def\mX{\mathcal{X}}

\global\long\def\mZ{\mathcal{Z}}

\global\long\def\mS{\mathcal{S}}

\global\long\def\Ical{\mathcal{I}}

\global\long\def\mT{\mathcal{T}}

\global\long\def\Pcal{\mathcal{P}}

\global\long\def\dist{d}

\global\long\def\HX{\entro\left(X\right)}
 \global\long\def\entropyX{\HX}

\global\long\def\HY{\entro\left(Y\right)}
 \global\long\def\entropyY{\HY}

\global\long\def\HXY{\entro\left(X,Y\right)}
 \global\long\def\entropyXY{\HXY}

\global\long\def\mutualXY{\mutual\left(X;Y\right)}
 \global\long\def\mutinfoXY{\mutualXY}

\global\long\def\given{\mid}

\global\long\def\gv{\given}

\global\long\def\goto{\rightarrow}

\global\long\def\asgoto{\stackrel{a.s.}{\longrightarrow}}

\global\long\def\pgoto{\stackrel{p}{\longrightarrow}}

\global\long\def\dgoto{\stackrel{d}{\longrightarrow}}

\global\long\def\lik{\mathcal{L}}

\global\long\def\logll{\mathit{l}}

\global\long\def\bigcdot{\raisebox{-0.5ex}{\scalebox{1.5}{\ensuremath{\cdot}}}}

\global\long\def\sig{\textrm{sig}}

\global\long\def\likelihood{\mathcal{L}}

\global\long\def\vectorize#1{\mathbf{#1}}

\global\long\def\vt#1{\mathbf{#1}}

\global\long\def\gvt#1{\boldsymbol{#1}}

\global\long\def\idp{\ \bot\negthickspace\negthickspace\bot\ }
 \global\long\def\cdp{\idp}

\global\long\def\das{}

\global\long\def\id{\mathbb{I}}

\global\long\def\idarg#1#2{\id\left\{  #1,#2\right\}  }

\global\long\def\iid{\stackrel{\text{iid}}{\sim}}

\global\long\def\bzero{\vt 0}

\global\long\def\bone{\mathbf{1}}

\global\long\def\a{\mathrm{a}}

\global\long\def\ba{\mathbf{a}}

\global\long\def\b{\mathrm{b}}

\global\long\def\bb{\mathbf{b}}

\global\long\def\B{\mathrm{B}}

\global\long\def\boldm{\boldsymbol{m}}

\global\long\def\c{\mathrm{c}}

\global\long\def\C{\mathrm{C}}

\global\long\def\d{\mathrm{d}}

\global\long\def\D{\mathrm{D}}

\global\long\def\N{\mathrm{N}}

\global\long\def\h{\mathrm{h}}

\global\long\def\H{\mathrm{H}}

\global\long\def\bH{\mathbf{H}}

\global\long\def\K{\mathrm{K}}

\global\long\def\M{\mathrm{M}}

\global\long\def\bff{\vt f}

\global\long\def\bx{\mathbf{\mathbf{x}}}

\global\long\def\bl{\boldsymbol{l}}

\global\long\def\s{\mathrm{s}}

\global\long\def\T{\mathrm{T}}

\global\long\def\bu{\mathbf{u}}

\global\long\def\v{\mathrm{v}}

\global\long\def\bv{\mathbf{v}}

\global\long\def\bo{\boldsymbol{o}}

\global\long\def\bh{\mathbf{h}}

\global\long\def\bs{\boldsymbol{s}}

\global\long\def\x{\mathrm{x}}

\global\long\def\bx{\mathbf{x}}

\global\long\def\bz{\mathbf{z}}

\global\long\def\hbz{\hat{\bz}}

\global\long\def\z{\mathrm{z}}

\global\long\def\y{\mathrm{y}}

\global\long\def\bxnew{\boldsymbol{y}}

\global\long\def\bX{\boldsymbol{X}}

\global\long\def\tbx{\tilde{\bx}}

\global\long\def\by{\boldsymbol{y}}

\global\long\def\bY{\boldsymbol{Y}}

\global\long\def\bZ{\boldsymbol{Z}}

\global\long\def\bU{\boldsymbol{U}}

\global\long\def\bn{\boldsymbol{n}}

\global\long\def\bV{\boldsymbol{V}}

\global\long\def\bI{\boldsymbol{I}}

\global\long\def\J{\mathrm{J}}

\global\long\def\bJ{\mathbf{J}}

\global\long\def\w{\mathrm{w}}

\global\long\def\bw{\vt w}

\global\long\def\bW{\mathbf{W}}

\global\long\def\balpha{\gvt{\alpha}}

\global\long\def\bdelta{\boldsymbol{\delta}}

\global\long\def\bsigma{\gvt{\sigma}}

\global\long\def\bbeta{\gvt{\beta}}

\global\long\def\bmu{\gvt{\mu}}

\global\long\def\btheta{\boldsymbol{\theta}}

\global\long\def\blambda{\boldsymbol{\lambda}}

\global\long\def\bgamma{\boldsymbol{\gamma}}

\global\long\def\bpsi{\boldsymbol{\psi}}

\global\long\def\bphi{\boldsymbol{\phi}}

\global\long\def\bpi{\boldsymbol{\pi}}

\global\long\def\bomega{\boldsymbol{\omega}}

\global\long\def\bepsilon{\boldsymbol{\epsilon}}

\global\long\def\btau{\boldsymbol{\tau}}

\global\long\def\bxi{\boldsymbol{\xi}}

\global\long\def\realset{\mathbb{R}}

\global\long\def\realn{\realset^{n}}

\global\long\def\integerset{\mathbb{Z}}

\global\long\def\natset{\integerset}

\global\long\def\integer{\integerset}

\global\long\def\natn{\natset^{n}}

\global\long\def\rational{\mathbb{Q}}

\global\long\def\rationaln{\rational^{n}}

\global\long\def\complexset{\mathbb{C}}

\global\long\def\comp{\complexset}

\global\long\def\compl#1{#1^{\text{c}}}

\global\long\def\and{\cap}

\global\long\def\compn{\comp^{n}}

\global\long\def\comb#1#2{\left({#1\atop #2}\right) }

\global\long\def\nchoosek#1#2{\left({#1\atop #2}\right)}

\global\long\def\param{\vt w}

\global\long\def\Param{\Theta}

\global\long\def\meanparam{\gvt{\mu}}

\global\long\def\Meanparam{\mathcal{M}}

\global\long\def\meanmap{\mathbf{m}}

\global\long\def\logpart{A}

\global\long\def\simplex{\Delta}

\global\long\def\simplexn{\simplex^{n}}

\global\long\def\dirproc{\text{DP}}

\global\long\def\ggproc{\text{GG}}

\global\long\def\DP{\text{DP}}

\global\long\def\ndp{\text{nDP}}

\global\long\def\hdp{\text{HDP}}

\global\long\def\gempdf{\text{GEM}}

\global\long\def\rfs{\text{RFS}}

\global\long\def\bernrfs{\text{BernoulliRFS}}

\global\long\def\poissrfs{\text{PoissonRFS}}

\global\long\def\grad{\gradient}
 \global\long\def\gradient{\nabla}

\global\long\def\partdev#1#2{\partialdev{#1}{#2}}
 \global\long\def\partialdev#1#2{\frac{\partial#1}{\partial#2}}

\global\long\def\partddev#1#2{\partialdevdev{#1}{#2}}
 \global\long\def\partialdevdev#1#2{\frac{\partial^{2}#1}{\partial#2\partial#2^{\top}}}

\global\long\def\closure{\text{cl}}

\global\long\def\cpr#1#2{\Pr\left(#1\ |\ #2\right)}

\global\long\def\var{\text{Var}}

\global\long\def\Var#1{\text{Var}\left[#1\right]}

\global\long\def\cov{\text{Cov}}

\global\long\def\Cov#1{\cov\left[ #1 \right]}

\global\long\def\COV#1#2{\underset{#2}{\cov}\left[ #1 \right]}

\global\long\def\corr{\text{Corr}}

\global\long\def\sst{\text{T}}

\global\long\def\SST{\sst}

\global\long\def\ess{\mathbb{E}}

\global\long\def\Ess#1{\ess\left[#1\right]}

\newcommandx\ESS[2][usedefault, addprefix=\global, 1=]{\underset{#2}{\ess}\left[#1\right]}

\global\long\def\fisher{\mathcal{F}}

\global\long\def\bfield{\mathcal{B}}
 \global\long\def\borel{\mathcal{B}}

\global\long\def\bernpdf{\text{Bernoulli}}

\global\long\def\betapdf{\text{Beta}}

\global\long\def\dirpdf{\text{Dir}}

\global\long\def\gammapdf{\text{Gamma}}

\global\long\def\gaussden#1#2{\text{Normal}\left(#1, #2 \right) }

\global\long\def\gauss{\mathbf{N}}

\global\long\def\gausspdf#1#2#3{\text{Normal}\left( #1 \lcabra{#2, #3}\right) }

\global\long\def\multpdf{\text{Mult}}

\global\long\def\poiss{\text{Pois}}

\global\long\def\poissonpdf{\text{Poisson}}

\global\long\def\pgpdf{\text{PG}}

\global\long\def\wshpdf{\text{Wish}}

\global\long\def\iwshpdf{\text{InvWish}}

\global\long\def\nwpdf{\text{NW}}

\global\long\def\niwpdf{\text{NIW}}

\global\long\def\studentpdf{\text{Student}}

\global\long\def\unipdf{\text{Uni}}

\global\long\def\transp#1{\transpose{#1}}
 \global\long\def\transpose#1{#1^{\mathsf{T}}}

\global\long\def\mgt{\succ}

\global\long\def\mge{\succeq}

\global\long\def\idenmat{\mathbf{I}}

\global\long\def\trace{\mathrm{tr}}

\global\long\def\argmax#1{\underset{_{#1}}{\text{argmax}} }

\global\long\def\argmin#1{\underset{_{#1}}{\text{argmin}\ } }

\global\long\def\diag{\text{diag}}

\global\long\def\norm{}

\global\long\def\spn{\text{span}}

\global\long\def\vtspace{\mathcal{V}}

\global\long\def\field{\mathcal{F}}
 \global\long\def\ffield{\mathcal{F}}

\global\long\def\inner#1#2{\left\langle #1,#2\right\rangle }
 \global\long\def\iprod#1#2{\inner{#1}{#2}}

\global\long\def\dprod#1#2{#1 \cdot#2}

\global\long\def\norm#1{\left\Vert #1\right\Vert }

\global\long\def\entro{\mathbb{H}}

\global\long\def\entropy{\mathbb{H}}

\global\long\def\Entro#1{\entro\left[#1\right]}

\global\long\def\Entropy#1{\Entro{#1}}

\global\long\def\mutinfo{\mathbb{I}}

\global\long\def\relH{\mathit{D}}

\global\long\def\reldiv#1#2{\relH\left(#1||#2\right)}

\global\long\def\KL{KL}

\global\long\def\KLdiv#1#2{\KL\left(#1\parallel#2\right)}
 \global\long\def\KLdivergence#1#2{\KL\left(#1\ \parallel\ #2\right)}

\global\long\def\crossH{\mathcal{C}}
 \global\long\def\crossentropy{\mathcal{C}}

\global\long\def\crossHxy#1#2{\crossentropy\left(#1\parallel#2\right)}

\global\long\def\breg{\text{BD}}

\global\long\def\lcabra#1{\left|#1\right.}

\global\long\def\lbra#1{\lcabra{#1}}

\global\long\def\rcabra#1{\left.#1\right|}

\global\long\def\rbra#1{\rcabra{#1}}

\global\long\def\model{\text{D2GAN}}
\vspace{-2mm}

\section{Introduction}

\vspace{-2mm}
Generative models are a subarea of research that has been rapidly
growing in recent years, and successfully applied in a wide range
of modern real-world applications (e.g., see chapter 20 in \cite{goodfellow_etal_mit16_deeplearning}).
Their common approach is to address the density estimation problem
where one aims to learn a model distribution $p_{\textrm{model }}$
that approximates the true, but \emph{unknown}, data distribution
$p_{\textrm{data}}$. Methods in this approach deal with two fundamental
problems. First, the learning behaviors and performance of generative
models depend on the choice of objective functions to train them \cite{theis2015note,huszar2015not}.
The most widely-used objective, considered the de-facto standard one,
is to follow the principle of maximum likelihood estimate that seeks
model parameters to maximize the likelihood of training data. This
is equivalent to minimizing the Kullback-Leibler (KL) divergence between
data and model distributions: $D_{\textrm{KL}}\left(p_{\textrm{data}}\Vert p_{\textrm{model }}\right)$.
It has been observed that this minimization tends to result in $p_{\textrm{model }}$
that covers multiple modes of $p_{\textrm{data}}$, but may produce
completely unseen and potentially undesirable samples \cite{theis2015note}.
By contrast, another approach is to swap the arguments and instead,
minimize: $D_{\textrm{KL}}\left(p_{\textrm{model }}\Vert p_{\textrm{data}}\right)$,
which is usually referred to as the \emph{reverse} KL divergence \cite{nowozin_etal_nips16_fgan,goodfellow_nips17_gan_tutorial,huszar2015not,theis2015note}.
It is observed that optimization towards the reverse KL divergence
criteria mimics the mode-seeking process where $p_{\textrm{model}}$
concentrates on a \emph{single} mode of $p_{\textrm{data}}$ while
ignoring other modes, known as the problem of \emph{mode collapse}.
These behaviors are well-studied in \cite{theis2015note,huszar2015not,goodfellow_nips17_gan_tutorial}.

The second problem is the choice of formulation for the density function
of $p_{\textrm{model}}$ \cite{goodfellow_etal_mit16_deeplearning}.
One might choose to define an \emph{explicit} density function, and
then straightforwardly follow maximum likelihood framework to estimate
the parameters. Another approach is to estimate the data distribution
using an \emph{implicit} density function, without the need for analytical
forms of $p_{\textrm{model}}$ (e.g., see \cite{goodfellow_nips17_gan_tutorial}
for further discussions). An idea is to borrow the principle of minimal enclosing ball \cite{benhur_etal_jmlr01_svc} to
train a generator in such a way that both training and generated data, after being mapped to the feature space, are enclosed in the same sphere \cite{le_etal_arxiv17_gen}. However, the most notably pioneered class
of this approach is the generative adversarial network (GAN) \cite{goodfellow_etal_nips14_gan},
an expressive generative model that is capable of producing sharp
and realistic images for natural scenes. Different from most generative
models that maximize data likelihood or its lower bound, GAN takes
a radical approach that simulates a game between two players: a generator
$G$ that generates data by mapping samples from a noise space to
the input space; and a discriminator $D$ that acts as a classifier
to distinguish \emph{real} samples of a dataset from \emph{fake} samples
produced by the generator $G$. Both $G$ and $D$ are parameterized
via neural networks, thus this method can be categorized into the
family of deep generative models or generative neural models \cite{goodfellow_etal_mit16_deeplearning}.

The optimization of GAN formulates a minimax problem, wherein given
an optimal $D$, the learning objective turns into finding $G$ that
minimizes the Jensen-Shannon divergence (JSD): $D_{\textrm{JS}}\left(p_{\textrm{data}}\Vert p_{\textrm{model}}\right)$.
The behavior of JSD minimization has been empirically proven to be
more similar to reverse KL than to KL divergence \cite{theis2015note,huszar2015not}.
This, however, leads to the aforementioned issue of mode collapse,
which is indeed a notorious failure of GAN \cite{goodfellow_nips17_gan_tutorial}
where the generator only produces similarly looking images, yielding
a low entropy distribution with poor variety of samples.

Recent attempts have been made to solve the mode collapsing problem
by improving the training of GAN. One idea is to use the minibatch
discrimination trick \cite{salimans_etal_nips16_improved} to allow
the discriminator to detect samples that are unusually similar to
other generated samples. Although this heuristics helps to generate
visually appealing samples very quickly, it is computationally expensive,
thus normally used in the last hidden layer of discriminator. Another
approach is to unroll the optimization of discriminator by several
steps to create a surrogate objective for the update of generator
during training \cite{metz2016unrolled}. The third approach is to train many
generators that discover different modes of the data \cite{hoang_etal_arxiv17_mggan}. Alternatively, around the same time, there
are various attempts to employ autoencoders as regularizers or auxiliary
losses to penalize missing modes \cite{che2016mode,warde2017improving,berthelot2017began,wang2017magan}.
These models can avoid the mode collapsing problem to a certain extent,
but at the cost of computational complexity with the exception of
DFM in \cite{warde2017improving}, rendering them \emph{unscalable}
up to ImageNet, a large-scale and challenging visual dataset.

Addressing these challenges, we propose a novel approach to both effectively
avoid mode collapse and efficiently scale up to very large datasets
(e.g., ImageNet). Our approach combines the KL and reverse KL divergences
into a unified objective function, thus it exploits the complementary
statistical properties from these divergences to effectively diversify
the estimated density in capturing multi-modes. We materialize our
idea using GAN's framework, resulting in a novel generative adversarial
architecture containing three players: a discriminator $D_{1}$ that
rewards high scores for data sampled from $p_{\textrm{data}}$ rather
than generated from the generator distribution $p_{G}$ whilst another
discriminator $D_{2}$, conversely, favoring data from $p_{G}$ rather
$p_{\textrm{data}}$, and a generator $G$ that generates data to
fool both two discriminators. We term our proposed model \emph{dual
discriminator generative adversarial network} ($\model$).

It turns out that training $\model$ shares the same minimax problem
as in GAN, which can be solved by alternatively updating the generator
and discriminators. We provide theoretical analysis showing that,
given $G$, $D_{1}$ and $D_{2}$ with enough capacity, i.e., in the
nonparametric limit, at the optimal points, the training criterion
indeed results in the minimal distance between data and model distribution
with respect to both their KL and reverse KL divergences. This helps
the model place fair distribution of probability mass across the modes
of the data generating distribution, thus allowing one to recover
the data distribution and generate diverse samples using the generator
in a single shot. In addition, we further introduce hyperparameters
to stabilize the learning and control the effect of each divergence.

We conduct extensive experiments on one synthetic dataset and four
real-world large-scale datasets (MNIST, CIFAR10, STL-10, ImageNet)
of very different nature. Since evaluating generative models is notoriously
hard \cite{theis2015note}, we have made our best effort to adopt
a number of evaluation metrics from literature to quantitatively compare
our proposed model with the latest state-of-the-art baselines whenever
possible. The experimental results reveal that our method is capable
of improving the diversity while keeping good quality of generated
samples. More importantly, our proposed model can be scaled up to
train on the large-scale ImageNet database, obtain a competitive variety
score and generate reasonably good quality images.

In short, our main contributions are: (i) a novel generative adversarial
model that encourages the diversity of samples produced by the generator;
(ii) a theoretical analysis to prove that our objective is optimized
towards minimizing both KL and reverse KL divergence and has a global
optimum where $p_{G}=p_{\textrm{data}}$; and (iii) a comprehensive
evaluation on the effectiveness of our proposed method using a wide
range of quantitative criteria on large-scale datasets.

\vspace{-2mm}

\section{Generative Adversarial Nets}

\vspace{-2mm}
We first review the generative adversarial network (GAN) that was
introduced in \cite{goodfellow_etal_nips14_gan} to formulate a game
of two players: a discriminator $D$ and a generator $G$. The discriminator,
$D\left(\bx\right)$, takes a point $\bx$ in data space and computes
the probability that $\bx$ is sampled from data distribution $p_{\textrm{data}}$,
rather than generated by the generator $G$. At the same time, the
generator first maps a noise vector $\bz$ drawn from a prior $p\left(\bz\right)$
to the data space, obtaining a sample $G\left(\bz\right)$ that resembles
the training data, and then uses this sample to challenge the discriminator.
The mapping $G\left(\bz\right)$ induces a generator distribution
$P_{G}$ in data domain with probability density function $p_{G}\left(\bx\right)$.
Both $G$ and $D$ are parameterized by neural networks (see Fig.~\ref{fig:GAN}
for an illustration) and learned by solving the following minimax
optimization:
\begin{align*}
\min_{G}\max_{D}\mathcal{J}\left(G,D\right) & =\mathbb{E}_{\vectorize x\sim P_{data}\left(\vectorize x\right)}\left[\log\left(D\left(\vectorize x\right)\right)\right]+\mathbb{E}_{\vectorize z\sim P_{\vectorize z}}\left[\log\left(1-D\left(G\left(\vectorize z\right)\right)\right)\right]
\end{align*}

The learning follows an iterative procedure wherein the discriminator
and generator are alternatively updated. Given a fixed $G$, the maximization
subject to $D$ results in the optimal discriminator $D^{\star}\left(\bx\right)=\frac{p_{\text{data}}\left(\vectorize x\right)}{p_{\text{data}}\left(\vectorize x\right)+p_{G}\left(\vectorize x\right)}$
, whilst given this optimal $D^{\star}$, the minimization of $G$
turns into minimizing the Jensen-Shannon (JS) divergence between the
data and model distributions: $D_{\textrm{JS}}\left(P_{\textrm{data}}\Vert P_{G}\right)$
\cite{goodfellow_etal_nips14_gan}. At the Nash equilibrium of a game,
the model distribution recovers the data distribution exactly: $P_{G}=P_{\textrm{data}}$,
thus the discriminator $D$ now fails to differentiate real or fake
data as $D\left(\bx\right)=0.5,\forall\bx$.

\begin{figure}[h]
\begin{centering}
\vspace{-7mm}
\subfloat[GAN.\label{fig:GAN}]{\noindent \centering{}\includegraphics[width=0.25\textwidth]{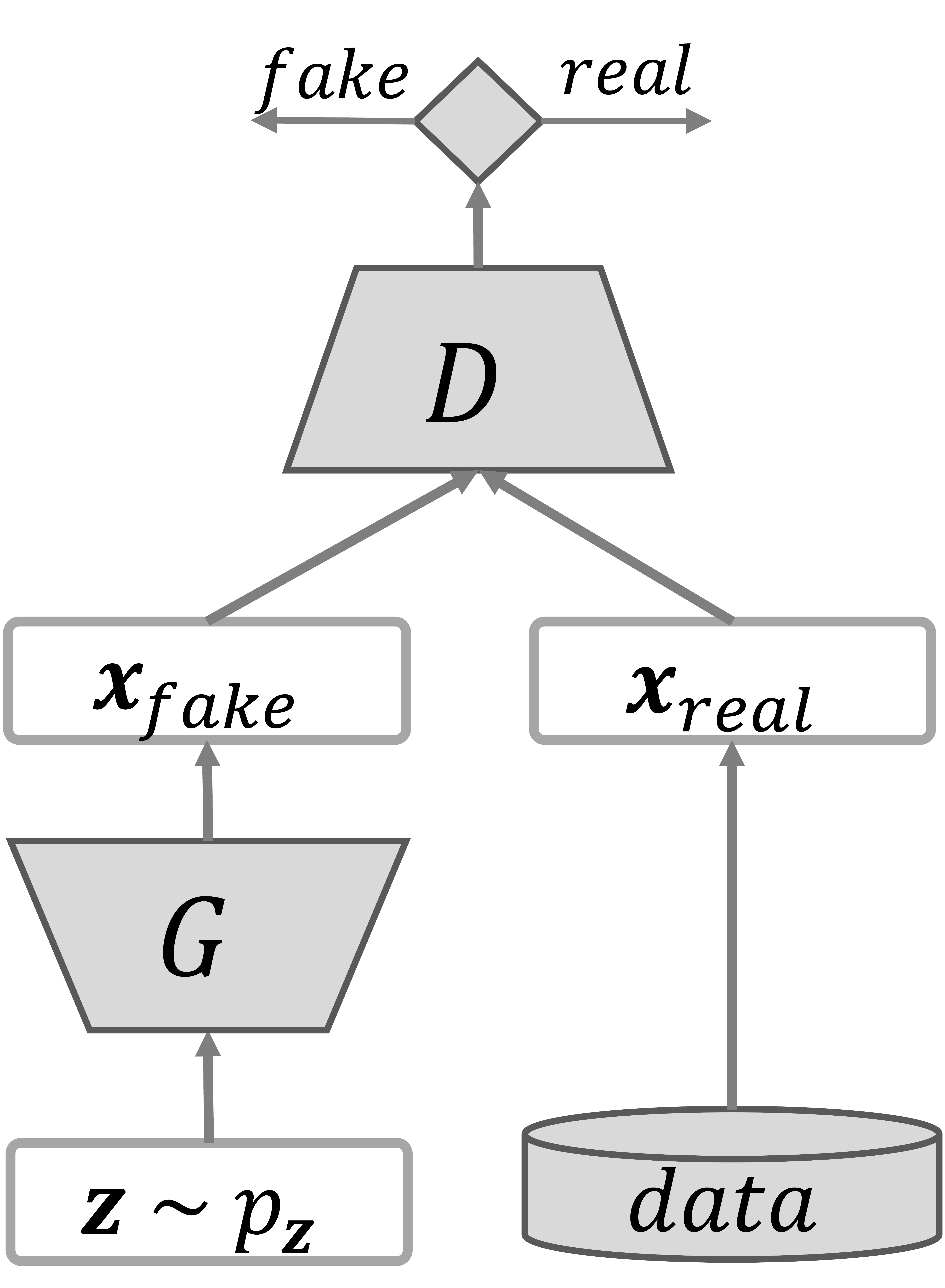}}\hspace{20mm}\subfloat[$\protect\model$.\label{fig:D2GAN}]{\noindent \centering{}\includegraphics[width=0.3\textwidth]{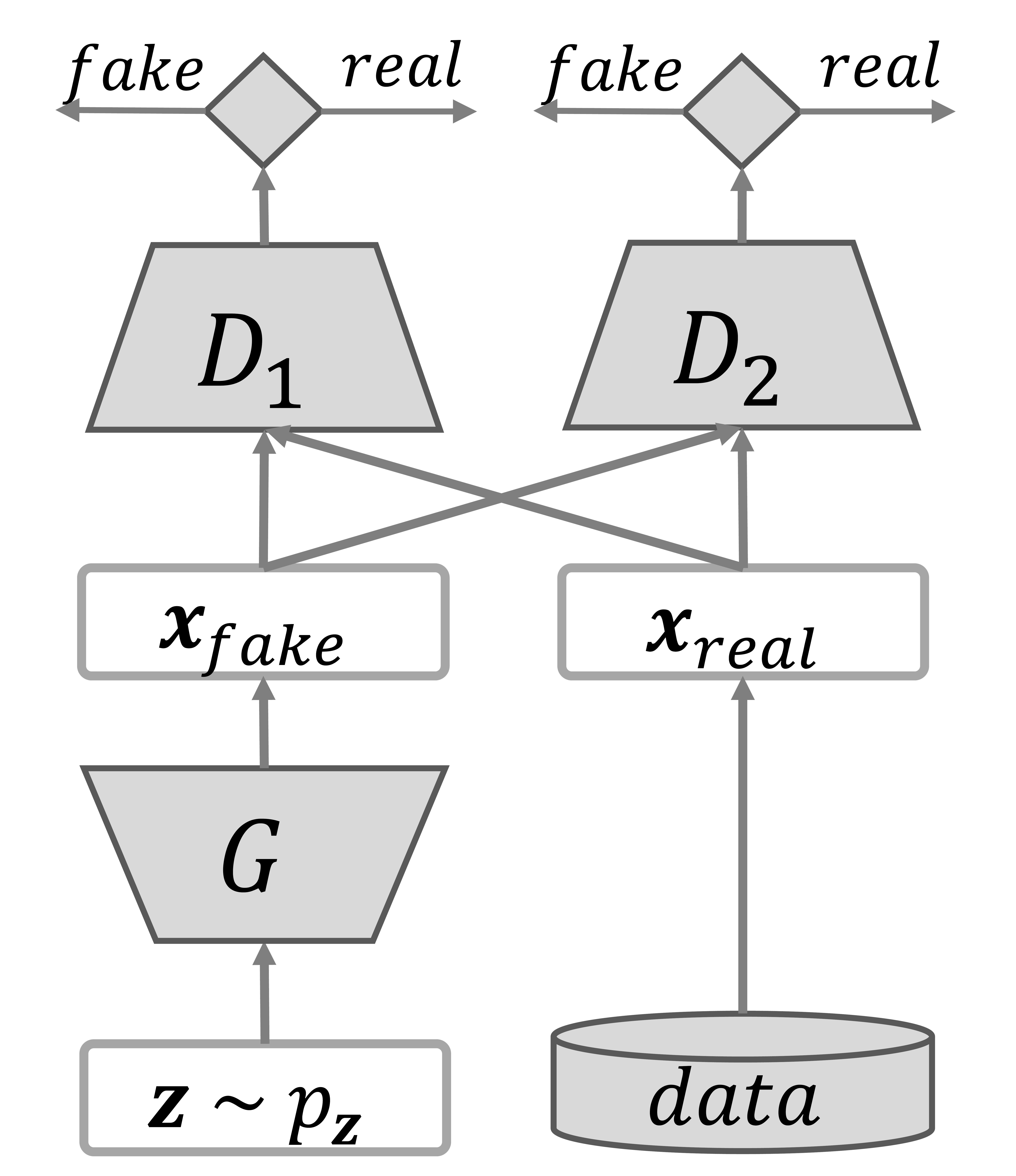}}\vspace{-1mm}

\par\end{centering}

\caption{An illustration of the standard GAN and our proposed $\protect\model$.\label{fig:GAN_vs_D2GAN}}
\vspace{-2mm}
\end{figure}

Since the JS divergence has been empirically proven to have the same
nature as that of the reverse KL divergence \cite{theis2015note,huszar2015not,goodfellow_nips17_gan_tutorial},
GAN suffers from the model collapsing problem, and thus its generated
data samples have low level of diversity \cite{metz2016unrolled,che2016mode}.

\section{Dual Discriminator Generative Adversarial Nets}

\vspace{-2mm}

To tackle GAN's problem of mode collapse, in what follows we present
our main contribution of a framework that seeks an approximated distribution
to effectively cover many modes of the multimodal data. Our intuition
is based on GAN, but we formulate a three-player game that consists
of two different discriminators $D_{1}$ and $D_{2}$, and one generator
$G$. Given a sample $\vectorize x$ in data space, $D_{1}\left(\vectorize x\right)$
rewards a high score if $\vectorize x$ is drawn from the data distribution
$P_{\text{data}}$, and gives a low score if generated from the model
distribution $P_{G}$. In contrast, $D_{2}\left(\vectorize x\right)$
returns a high score for $\bx$ generated from $P_{G}$ whilst giving
a low score for a sample drawn from $P_{\textrm{data}}$. Unlike GAN,
the scores returned by our discriminators are values in $\realset^{+}$
rather than probabilities in $\left[0,1\right]$. Our generator $G$
performs a similar role to that of GAN, i.e., producing data mapped
from a noise space to synthesize the real data and then fool both
two discriminators $D_{1}$ and $D_{2}$. All three players are parameterized
by neural networks wherein $D_{1}$ and $D_{2}$ do not share their
parameters. We term our proposed model\emph{ dual discriminator generative
adversarial network} ($\model$). Fig.~\ref{fig:D2GAN} shows an
illustration of $\model$.

More formally, $D_{1}$, $D_{2}$ and $G$ now play the following
three-player minimax optimization game:
\begin{align}
\min_{G}\max_{D_{1},D_{2}}\mathcal{J}\left(G,D_{1},D_{2}\right) & =\alpha\times\mathbb{E}_{\bx\sim P_{\text{data}}}\left[\log D_{1}\left(\vectorize x\right)\right]+\mathbb{E}_{\bz\sim P_{\bz}}\left[-D_{1}\left(G\left(\vectorize z\right)\right)\right]\nonumber \\
 & \,\,\,\,\,\,+\mathbb{E}_{\bx\sim P_{\text{data}}}\left[-D_{2}\left(\vectorize x\right)\right]+\beta\times\mathbb{E}_{\bz\sim P_{\bz}}\left[\log D_{2}\left(G\left(\vectorize z\right)\right)\right]\label{eq:D2GAN_loss}
\end{align}
wherein we have introduced hyperparameters $0<\alpha,\beta\leq1$
to serve two purposes. The first is to stabilize the learning of our
model. As the output values of two discriminators are positive and
unbounded, $D_{1}\left(G\left(\bz\right)\right)$ and $D_{2}\left(\vectorize x\right)$
in Eq.~(\ref{eq:D2GAN_loss}) can become very large and have exponentially
stronger impact on the optimization than $\log D_{1}\left(\vectorize x\right)$
and $\log D_{2}\left(G\left(\vectorize z\right)\right)$ do, rendering
the learning unstable. To overcome this issue, we can decrease $\alpha$
and $\beta$, in effect making the optimization penalize $D_{1}\left(G\left(\bz\right)\right)$
and $D_{2}\left(\vectorize x\right)$, thus helping to stabilize the
learning. The second purpose of introducing $\alpha$ and $\beta$
is to control the effect of KL and reverse KL divergences on the optimization
problem. This will be discussed in the following part once we have
the derivation of our optimal solution.

Similar to GAN \cite{goodfellow_etal_nips14_gan}, our proposed network
can be trained by alternatively updating $D_{1},D_{2}$ and $G$.
We refer to the supplementary material for the pseudo-code of learning
parameters for $\model$.\vspace{-1mm}

\subsection{Theoretical analysis}

\vspace{-2mm}
We now provide formal theoretical analysis of our proposed model,
that essentially shows that, given $G$, $D_{1}$ and $D_{2}$ are
of enough capacity, i.e., in the nonparametric limit, at the optimal
points, $G$ can recover the data distributions by minimizing both
KL and reverse KL divergences between model and data distributions.
We first consider the optimization problem with respect to (w.r.t)
discriminators given a fixed generator.
\begin{prop}
\label{thm:D2GAN_opt_d}Given a fixed $G$, maximizing $\mathcal{J}\left(G,D_{1},D_{2}\right)$
yields to the following closed-form optimal discriminators $D_{1}^{\star},D_{2}^{\star}$\emph{:
\begin{eqnarray*}
D_{1}^{\star}\left(\vectorize x\right)=\frac{\alpha p_{\text{data}}\left(\vectorize x\right)}{p_{G}\left(\vectorize x\right)} & \textrm{and} & D_{2}^{\star}\left(\vectorize x\right)=\frac{\beta p_{G}\left(\vectorize x\right)}{p_{\text{data}}\left(\vectorize x\right)}
\end{eqnarray*}
}\end{prop}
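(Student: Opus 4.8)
The plan is to adapt the standard calculus-of-variations argument that yields GAN's optimal discriminator, now handling two discriminators with unbounded $\mathbb{R}^{+}$-valued outputs. First I would convert the two generator-driven expectations in Eq.~(\ref{eq:D2GAN_loss}) into integrals against the induced density $p_G$. Since $G\left(\bz\right)$ with $\bz\sim P_{\bz}$ has density $p_G$, the law of the unconscious statistician gives $\mathbb{E}_{\bz\sim P_{\bz}}\left[-D_{1}\left(G\left(\bz\right)\right)\right]=-\int p_{G}\left(\bx\right)D_{1}\left(\bx\right)d\bx$ and $\mathbb{E}_{\bz\sim P_{\bz}}\left[\log D_{2}\left(G\left(\bz\right)\right)\right]=\int p_{G}\left(\bx\right)\log D_{2}\left(\bx\right)d\bx$. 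Collecting all four terms over the common variable $\bx$, the objective becomes
\[
\mathcal{J}\left(G,D_{1},D_{2}\right)=\int\Big[\alpha\,p_{\text{data}}\left(\bx\right)\log D_{1}\left(\bx\right)-p_{G}\left(\bx\right)D_{1}\left(\bx\right)\Big]d\bx+\int\Big[\beta\,p_{G}\left(\bx\right)\log D_{2}\left(\bx\right)-p_{\text{data}}\left(\bx\right)D_{2}\left(\bx\right)\Big]d\bx.
\]

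The key structural observation is that the integrand splits into one part depending only on $D_{1}$ and one depending only on $D_{2}$, so the joint maximization over $\left(D_{1},D_{2}\right)$ \emph{decouples} into two independent problems. Moreover, because $D_{1}$ and $D_{2}$ range over all measurable functions into $\mathbb{R}^{+}$ with no constraint coupling distinct points $\bx$, each integral is maximized by maximizing its integrand pointwise at every fixed $\bx$. This collapses the whole problem to a one-dimensional optimization.

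The core computational lemma is then: for constants $a>0$, $b>0$, the scalar map $y\mapsto a\log y-by$ on $\left(0,\infty\right)$ is strictly concave, since its second derivative $-a/y^{2}$ is negative, and attains its unique maximum where $a/y-b=0$, i.e.\ at $y=a/b$. Applying this to the $D_{1}$ integrand with $a=\alpha\,p_{\text{data}}\left(\bx\right)$ and $b=p_{G}\left(\bx\right)$ gives $D_{1}^{\star}\left(\bx\right)=\alpha\,p_{\text{data}}\left(\bx\right)/p_{G}\left(\bx\right)$; applying it to the $D_{2}$ integrand with $a=\beta\,p_{G}\left(\bx\right)$ and $b=p_{\text{data}}\left(\bx\right)$ gives $D_{2}^{\star}\left(\bx\right)=\beta\,p_{G}\left(\bx\right)/p_{\text{data}}\left(\bx\right)$, exactly as claimed. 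Strict concavity simultaneously confirms these are genuine maxima rather than mere stationary points.

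I do not expect a deep obstacle: the argument is essentially the standard GAN variational computation. The points deserving care — and what I would treat as the main subtlety — are the two technical justifications: (i) the interchange letting pointwise maximization of the integrand produce the maximizer of the integral, which is valid because $D_{1}^{\star}$ and $D_{2}^{\star}$ are themselves measurable functions of $\bx$ and the pointwise suprema are attained; and (ii) restricting attention to the support where $p_{\text{data}}$ and $p_{G}$ are positive, so that the logarithms and the ratios $a/b$ are well defined. Both are routine and warrant only a sentence or two rather than extended treatment.
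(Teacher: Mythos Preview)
Your proposal is correct and follows essentially the same approach as the paper's own proof: convert the noise-space expectations to integrals against $p_G$, write $\mathcal{J}$ as a single integral whose integrand separates in $D_1$ and $D_2$, then maximize pointwise via the first-order condition and verify concavity through the second derivative. The only differences are cosmetic---you package the scalar optimization as a lemma about $y\mapsto a\log y - by$ and add brief remarks on measurability and support, whereas the paper simply sets derivatives to zero and checks the sign of the second derivatives directly.
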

\begin{proof}
According to the induced measure theorem \cite{gupta_shao_jstor00_mathematical},
two expectations are equal: $\mathbb{E}_{\bz\sim P_{\bz}}\left[f\left(G\left(\vectorize z\right)\right)\right]=\expect_{\bx\sim P_{G}}\left[f\left(\bx\right)\right]$
where $f\left(\bx\right)=-D_{1}\left(\bx\right)$ or $f\left(\bx\right)=\log D_{2}\left(\bx\right)$.
The objective function can be rewritten as below:
\begin{align*}
\mathcal{J}\left(G,D_{1},D_{2}\right) & =\alpha\times\mathbb{E}_{\bx\sim P_{\text{data}}}\left[\log D_{1}\left(\vectorize x\right)\right]+\mathbb{E}_{\bx\sim P_{G}}\left[-D_{1}\left(\bx\right)\right]\\
 & \,\,\,\,\,\,+\mathbb{E}_{\bx\sim P_{\text{data}}}\left[-D_{2}\left(\vectorize x\right)\right]+\beta\times\mathbb{E}_{\bx\sim P_{G}}\left[\log D_{2}\left(\bx\right)\right]\\
 & =\int_{\bx}\left[\alpha p_{\text{data}}\left(\vectorize x\right)\log D_{1}\left(\vectorize x\right)-p_{G}D_{1}\left(\bx\right)-p_{\textrm{data}}\left(\bx\right)D_{2}\left(\bx\right)+\beta p_{G}\log D_{2}\left(\bx\right)\right]\d\vectorize x
\end{align*}
Considering the function inside the integral, given $\vectorize x$,
we maximize this function w.r.t two variables $D_{1},D_{2}$ to find
$D_{1}^{\star}\left(\vectorize x\right)$ and $D_{2}^{\star}\left(\vectorize x\right)$.
Setting the derivatives w.r.t $D_{1}$ and $D_{2}$ to $0$, we gain:
\begin{eqnarray}
\frac{\alpha p_{\text{data}}\left(\vectorize x\right)}{D_{1}}-p_{G}\left(\vectorize x\right)=0 & \textrm{and} & \frac{\beta p_{G}\left(\vectorize x\right)}{D_{2}}-p_{\text{data}}\left(\vectorize x\right)=0\label{eq:D2GAN_opt_d}
\end{eqnarray}
The second derivatives: $\nicefrac{-\alpha p_{\textrm{data}}\left(\bx\right)}{D_{1}^{2}}$
and $\nicefrac{-\beta p_{G}\left(\bx\right)}{D_{2}^{2}}$ are non-positive,
thus verifying that we have obtained the maximum solution and concluding
the proof.
\end{proof}
\vspace{-1mm}
Next, we fix $D_{1}=D_{1}^{\star},D_{2}=D_{2}^{\star}$ and find the
optimal solution $G^{\star}$ for the generator $G$.

\begin{thm}
\label{thm:D2GAN_opt_g}Given $D_{1}^{\star},D_{2}^{\star}$, at the
Nash equilibrium point $\left(G^{\star},D_{1}^{\star},D_{2}^{\star}\right)$
for minimax optimization problem of $\model$, we have the following
form for each component:
\begin{align*}
\mathcal{J}\left(G^{\star},D_{1}^{\star},D_{2}^{\star}\right) & =\alpha\left(\log\alpha-1\right)+\beta\left(\log\beta-1\right)\\
D_{1}^{\star}\left(\vectorize x\right) & =\alpha\,\,\textrm{and}\,\,D_{2}^{\star}\left(\vectorize x\right)=\beta,\forall\vectorize x\,\,\textrm{at}\,\,p_{G^{\star}}=p_{\textrm{data}}
\end{align*}
\end{thm}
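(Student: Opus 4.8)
The plan is to substitute the closed-form optimal discriminators $D_1^\star$ and $D_2^\star$ from Proposition~\ref{thm:D2GAN_opt_d} back into the objective $\mathcal{J}$, thereby collapsing the three-player game into a single functional of the generator distribution $p_G$ alone, which I will then recognize as a weighted sum of KL and reverse KL divergences plus additive constants.

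First I would apply the induced-measure identity used in Proposition~\ref{thm:D2GAN_opt_d} to rewrite both generator expectations as expectations over $P_G$, then plug in $D_1^\star(\bx)=\alpha p_{\text{data}}(\bx)/p_G(\bx)$ and $D_2^\star(\bx)=\beta p_G(\bx)/p_{\text{data}}(\bx)$. This produces four terms. The two logarithmic terms contribute
\[
\alpha\,\expect_{\bx\sim P_{\text{data}}}\!\left[\log\frac{\alpha p_{\text{data}}(\bx)}{p_G(\bx)}\right] = \alpha\log\alpha + \alpha\,\KLdiv{p_{\text{data}}}{p_G}
\]
and the symmetric term gives $\beta\log\beta + \beta\,\KLdiv{p_G}{p_{\text{data}}}$. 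The key cancellation happens in the two linear terms: $\expect_{\bx\sim P_G}[-\alpha p_{\text{data}}(\bx)/p_G(\bx)] = -\alpha\int p_{\text{data}}(\bx)\,\d\bx = -\alpha$ and, symmetrically, $\expect_{\bx\sim P_{\text{data}}}[-\beta p_G(\bx)/p_{\text{data}}(\bx)] = -\beta$, since each ratio times its sampling density integrates the opposite density to one.

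Collecting terms, the objective as a function of $G$ becomes
\[
C(G) = \alpha(\log\alpha-1) + \beta(\log\beta-1) + \alpha\,\KLdiv{p_{\text{data}}}{p_G} + \beta\,\KLdiv{p_G}{p_{\text{data}}}.
\]
Since both KL divergences are nonnegative and vanish simultaneously if and only if $p_G = p_{\text{data}}$ (Gibbs' inequality), the generator minimizes $C(G)$ exactly at $p_{G^\star}=p_{\text{data}}$, at which point the two divergence terms drop out and leave the claimed minimum value $\alpha(\log\alpha-1)+\beta(\log\beta-1)$. Substituting $p_{G^\star}=p_{\text{data}}$ into the formulas of Proposition~\ref{thm:D2GAN_opt_d} then immediately yields $D_1^\star(\bx)=\alpha$ and $D_2^\star(\bx)=\beta$ for all $\bx$, completing the argument.

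The computations themselves are routine; the main thing to get right is the treatment of the linear terms, where one must not mistake them for further divergence contributions but rather recognize that each collapses to a bare constant via the normalization of the opposite density. A secondary point worth flagging is that the decomposition into $\alpha\,\KLdiv{p_{\text{data}}}{p_G}+\beta\,\KLdiv{p_G}{p_{\text{data}}}$ is precisely what licenses the paper's central claim that $\model$ simultaneously minimizes the forward and reverse KL divergences, with $\alpha$ and $\beta$ appearing transparently as the mixing weights between the mode-covering and mode-seeking terms. I would also note that if the supports of $p_G$ and $p_{\text{data}}$ differ the divergences may be infinite, but this leaves the global minimizer $p_G=p_{\text{data}}$ unaffected.
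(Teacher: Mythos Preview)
Your proposal is correct and follows essentially the same approach as the paper: substitute the optimal discriminators from Proposition~\ref{thm:D2GAN_opt_d}, simplify the two linear terms to the constants $-\alpha$ and $-\beta$ via normalization, recognize the remaining expression as $\alpha(\log\alpha-1)+\beta(\log\beta-1)+\alpha\,\KLdiv{p_{\text{data}}}{p_G}+\beta\,\KLdiv{p_G}{p_{\text{data}}}$, and invoke nonnegativity of the KL divergences to locate the minimum at $p_{G^\star}=p_{\text{data}}$. Your write-up is in fact slightly more careful than the paper's, which in its prose says the discriminators ``return the same score of 1'' at equilibrium even though the theorem statement (and your argument) correctly give $D_1^\star\equiv\alpha$ and $D_2^\star\equiv\beta$.
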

\begin{proof}
Substituting $D_{1}^{\star},D_{2}^{\star}$ from Eq.~(\ref{eq:D2GAN_opt_d})
into the objective function in Eq.~(\ref{eq:D2GAN_loss}) of the
minimax problem, we gain:
\begin{align}
\mathcal{J}\left(G,D_{1}^{\star},D_{2}^{\star}\right) & =\alpha\times\mathbb{E}_{\bx\sim P_{\text{data}}}\left[\log\alpha+\log\frac{p_{\text{data}}\left(\vectorize x\right)}{p_{G}\left(\vectorize x\right)}\right]-\alpha\int_{\bx}p_{G}\left(\vectorize x\right)\frac{p_{\textrm{data}}\left(\vectorize x\right)}{p_{G}\left(\vectorize x\right)}\d\vectorize x\nonumber \\
 & \,\,\,\,\,\,-\beta\int_{\bx}p_{\textrm{data}}\frac{p_{G}\left(\bx\right)}{p_{\textrm{data}}\left(\bx\right)}\d\bx+\beta\times\mathbb{E}_{\bx\sim P_{G}}\left[\log\beta+\log\frac{p_{G}\left(\vectorize x\right)}{p_{\text{data}}\left(\vectorize x\right)}\right]\nonumber \\
 & =\alpha\left(\log\alpha-1\right)+\beta\left(\log\beta-1\right)+\alpha D_{\textrm{KL}}\left(P_{\text{data}}\Vert P_{G}\right)+\beta D_{\textrm{KL}}\left(P_{G}\Vert P_{\text{data}}\right)\label{eq:D2GAN_G_loss}
\end{align}
where $D_{\textrm{KL}}\left(P_{\text{data}}\Vert P_{G}\right)$ and
$D_{\textrm{KL}}\left(P_{G}\Vert P_{\text{data}}\right)$ is the KL
and reverse KL divergences between data and model (generator) distributions,
respectively. These divergences are always nonnegative and only zero
when two distributions are equal: $p_{G^{\star}}=p_{\textrm{data}}$.
In other words, the generator induces a distribution $p_{G^{\star}}$
that is identical to the data distribution $p_{\textrm{data}}$, and
two discriminators now fail to recognize the real or fake samples
since they return the same score of 1 for both samples. This concludes
the proof.
\end{proof}

\vspace{-2mm}
The loss of generator in Eq.~(\ref{eq:D2GAN_G_loss}) shows that
increasing $\alpha$ promotes the optimization towards minimizing
the KL divergence $D_{\textrm{KL}}\left(P_{\text{data}}\Vert P_{G}\right)$,
thus helping the generative distribution cover multiple modes, but
may include potentially undesirable samples; whereas increasing $\beta$
encourages the minimization of the reverse KL divergence $D_{\textrm{KL}}\left(P_{G}\Vert P_{\text{data}}\right)$,
hence enabling the generator capture a single mode better, but may
miss many modes. By empirically adjusting these two hyperparameters,
we can balance the effect of two divergences, and hence effectively
avoid the mode collapsing issue.

\vspace{-2mm}

\section{Experiments}

\vspace{-3mm}
In this section, we conduct comprehensive experiments to demonstrate
the capability of improving mode coverage and the scalability of our
proposed model on large-scale datasets. We use a synthetic 2D dataset
for both visual and numerical verification, and four datasets of increasing
diversity and size for numerical verification. We have made our best
effort to compare the results of our method with those of the latest
state-of-the-art GAN's variants by replicating experimental settings
in the original work whenever possible.

For each experiment, we refer to the supplementary material for model
architectures and additional results. Common points are: i) discriminators'
outputs with \emph{softplus} activations :$f\left(x\right)=\ln\left(1+e^{x}\right)$,
i.e., positive version of ReLU; (ii) Adam optimizer \cite{kingma2014adam}
with learning rate 0.0002 and the first-order momentum 0.5; (iii)
minibatch size of 64 samples for training both generator and discriminators;
(iv) Leaky ReLU with the slope of 0.2; and (v) weights initialized
from an isotropic Gaussian: $\mathcal{N}\left(0,0.01\right)$ and
zero biases. Our implementation is in TensorFlow \cite{tensorflow2015-whitepaper}
and will be released once published. We now present our experiments
on synthetic data followed by those on large-scale real-world datasets.\vspace{-1mm}

\subsection{Synthetic data}

\vspace{-2mm}
In the first experiment, we reuse the experimental design proposed
in \cite{metz2016unrolled} to investigate how well our $\model$
can deal with multiple modes in the data. More specifically, we sample
training data from a 2D mixture of 8 Gaussian distributions with a
covariance matrix 0.02$\bI$ and means arranged in a circle of zero
centroid and radius $2.0$. Data in these low variance mixture components
are separated by an area of very low density. The aim is to examine
properties such as low probability regions and low separation of modes.

We use a simple architecture of a generator with two fully connected
hidden layers and discriminators with one hidden layer of ReLU activations.
This setting is identical, thus ensures a fair comparison with UnrolledGAN\footnote{We obtain the code of UnrolledGAN for 2D data from the link authors
provided in \cite{metz2016unrolled}.} \cite{metz2016unrolled}. Fig.~\ref{fig:2D_distributions} shows
the evolution of 512 samples generated by our models and baselines
through time. It can be seen that the regular GAN generates data collapsing
into a \emph{single} mode hovering around the valid modes of data
distribution, thus reflecting the mode collapse in GAN. At the same
time, UnrolledGAN and $\model$ distribute data around \emph{all}
8 mixture components, and hence demonstrating the abilities to successfully
learn multimodal data in this case. At the last steps, our $\model$
captures data modes more precisely than UnrolledGAN as, in each mode,
the UnrolledGAN generates data that concentrate only on \emph{several}
points around the mode's centroid, thus seems to produce fewer samples
than $\model$ whose samples fairly spread out the \emph{entire} mode.

Next we further quantitatively compare the quality of generated data.
Since we know the true distribution $p_{\textrm{data}}$ in this case,
we employ two measures, namely symmetric KL divergence and Wasserstein
distance. These measures compute the distance between the normalized
histograms of 10,000 points generated from our $\model$, UnrolledGAN
and GAN to true $p_{\textrm{data}}$. Figs.~\ref{fig:2D_symmetric_KL}
and \ref{fig:2D_Wasserstein} again clearly demonstrate the superiority
of our approach over GAN and UnrolledGAN w.r.t both distances (lower
is better); notably with Wasserstein metric, the distance from ours
to the true distribution almost reduces to zero. These figures also
demonstrate the stability of our $\model$ (red curves) during training
as it is much less fluctuating compared with GAN (green curves) and
UnrolledGAN (blue curves).

\begin{figure}
\noindent \begin{centering}
\begin{minipage}[t]{0.3\textwidth}%
\noindent \begin{center}
\subfloat[Symmetric KL divergence.\label{fig:2D_symmetric_KL}]{\noindent \begin{centering}
\includegraphics[width=1\textwidth]{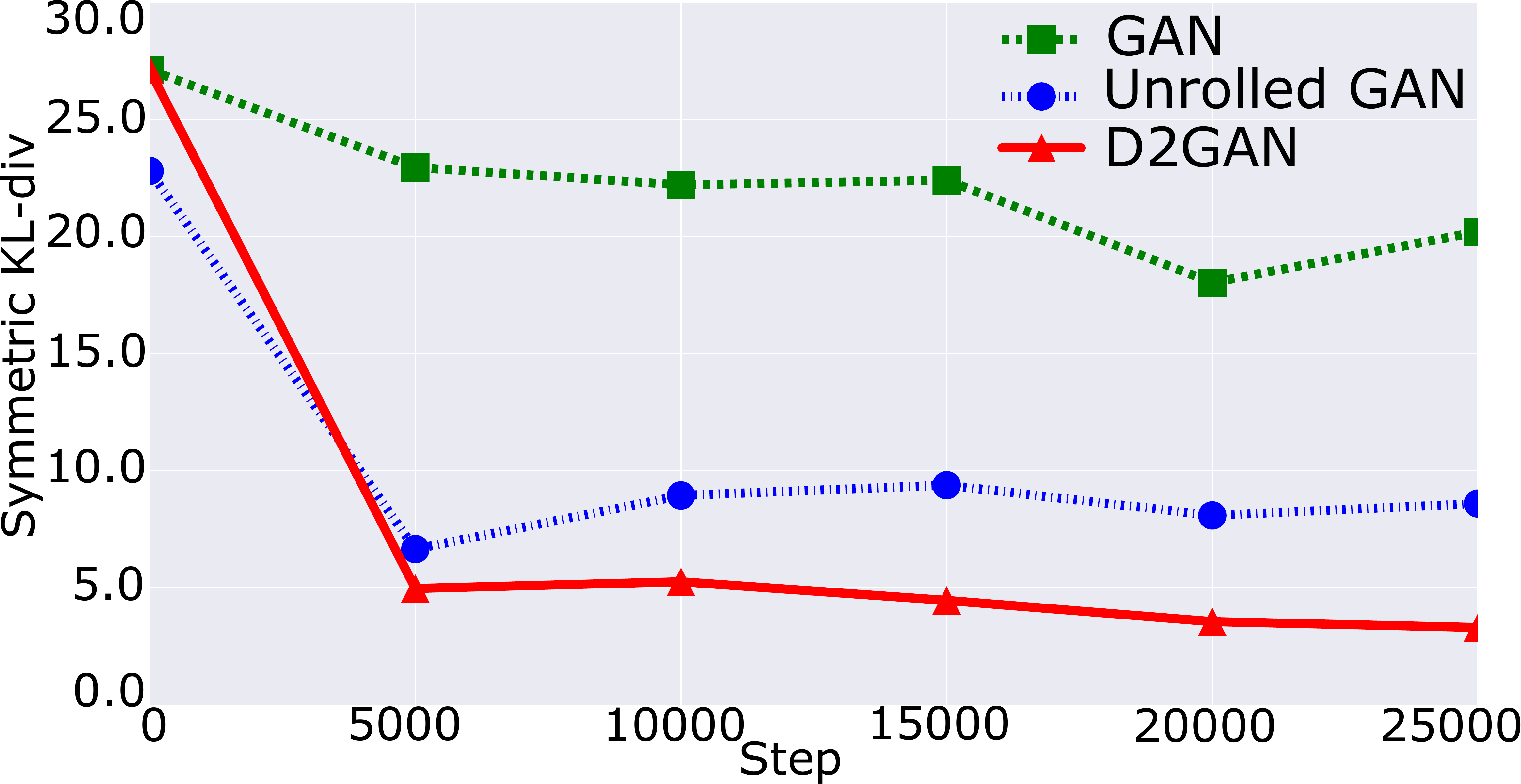}
\par\end{centering}

\noindent \centering{}}
\par\end{center}

\noindent \begin{center}
\subfloat[Wasserstein distance.\label{fig:2D_Wasserstein}]{\noindent \begin{centering}
\includegraphics[width=1\textwidth]{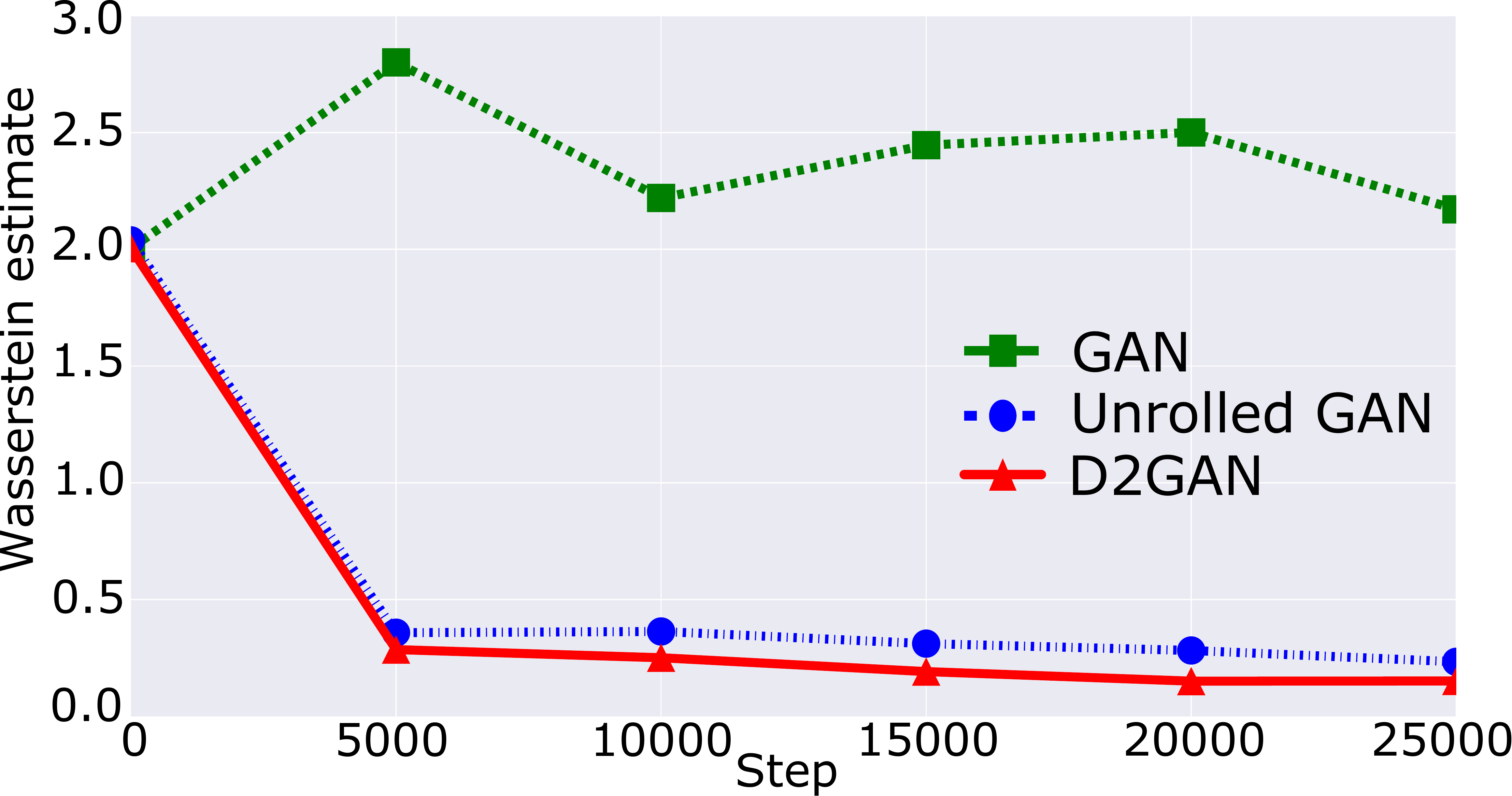}
\par\end{centering}

\noindent \centering{}}
\par\end{center}%
\end{minipage}\hfill{}%
\begin{minipage}[t]{0.65\columnwidth}%
\noindent \begin{center}
\subfloat[Evolution of data (in blue) generated from GAN (top row), UnrolledGAN
(middle row) and our $\protect\model$ (bottom row) on 2D data of
$8$ Gaussians. Data sampled from the true mixture are red.\label{fig:2D_distributions}]{\noindent \centering{}\includegraphics[width=1\textwidth]{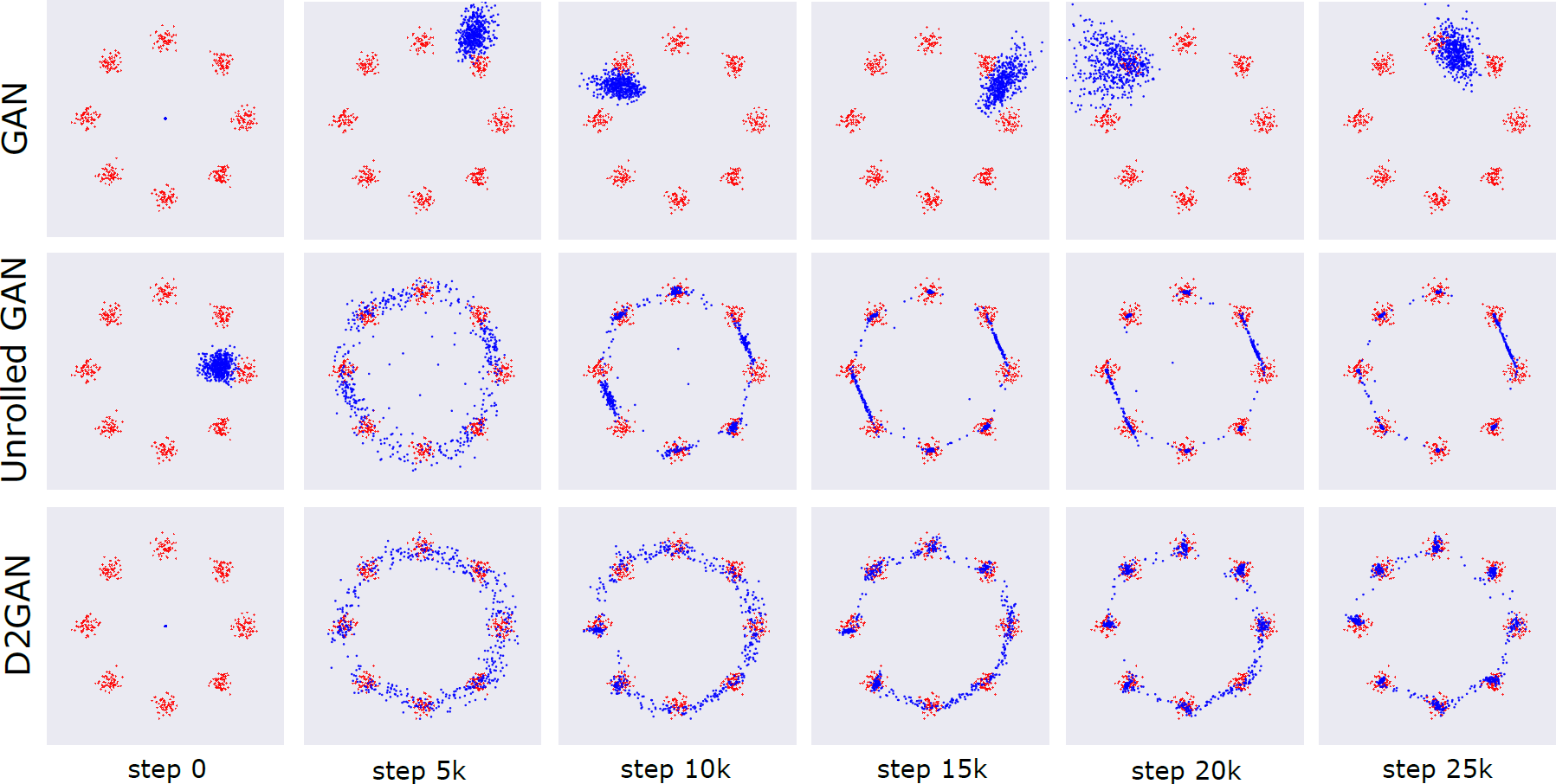}}
\par\end{center}%
\end{minipage}
\par\end{centering}

\noindent \centering{}\vspace{-2mm}
\caption{The comparison of standard GAN, UnrolledGAN and our $\protect\model$
on 2D synthetic dataset.}
\vspace{-5mm}
\end{figure}
\vspace{-2mm}

\subsection{Real-world datasets}

\vspace{-2mm}
We now examine the performance of our proposed method on real-world
datasets with increasing diversities and sizes. For networks containing
convolutional layers, we closely follow the DCGAN's design \cite{radford_etal_iclr15_unsupervised}.
We use strided convolutions for discriminators and fractional-strided
convolutions for generator instead of pooling layers. Batch normalization
is applied for each layer, except the generator output layer and the
discriminator input layers. We also use Leaky ReLU activations for
discriminators, and use ReLU for generator, except its output is \emph{tanh}
since we rescale the pixel intensities into the range of {[}-1, 1{]}
before feeding images to our model. Only one difference is that, for
our model, initializing the weights from $\mathcal{N}\left(0,0.01\right)$
yields slightly better results than from $\mathcal{N}\left(0,0.02\right)$.
We again refer to the supplementary material for detailed architectures.\vspace{-1mm}

\subsubsection{Evaluation protocol}

\vspace{-2mm}
Evaluating the quality of image produced by generative models is a
notoriously challenging due to the variety of probability criteria
and the lack of a perceptually meaningful image similarity metric
\cite{theis2015note}. Even when a model can generate plausible images,
it is not useful if those images are visually similar. Therefore,
in order to quantify the performance of covering data modes as well
as producing high quality samples, we use several different ad-hoc
metrics for different experiments to compare with other baselines.

First we adopt the \emph{Inception score} proposed in \cite{salimans_etal_nips16_improved},
which are computed by: $\exp\left(\expect_{\bx}\left[D_{\textrm{KL}}\left(p\left(\y\gv\bx\right)\parallel p\left(\y\right)\right)\right]\right)$,
where $p\left(\y\gv\bx\right)$ is the conditional label distribution
for image $\bx$ estimated using a pretrained Inception model \cite{szegedy_etal_cvpr16_rethinking},
and $p\left(\y\right)$ is the marginal distribution: $p\left(\y\right)\approx\nicefrac{1}{\N}\sum_{n=1}^{\N}p\left(\y\gv\bx_{n}=G\left(\bz_{n}\right)\right)$.
This metric rewards good and varied samples, but sometimes is easily
fooled by a model that collapses and generates to a very low quality
image, thus fails to measure whether a model has been trapped into
one bad mode. To address this problem, for labeled datasets, we further
recruit the so-called MODE score introduced in \cite{che2016mode}:
\[
\exp\left(\expect_{\bx}\left[D_{\textrm{KL}}\left(p\left(\y\gv\bx\right)\parallel\tilde{p}\left(\y\right)\right)\right]-D_{\textrm{KL}}\left(p\left(\y\right)\parallel\tilde{p}\left(\y\right)\right)\right)
\]
 where $\tilde{p}\left(\y\right)$ is the empirical distribution of
labels estimated from training data. The score can adequately reflect
the variety and visual quality of images, which is discussed in \cite{che2016mode}.\vspace{-1mm}

\subsubsection{Handwritten digit images}

\vspace{-2mm}
We start with the handwritten digit images -- MNIST \cite{lecun_etal_mnist}
that consists of 60,000 training and 10,000 testing 28$\times$28
grayscale images of digits from 0 to 9. Following the setting in \cite{che2016mode},
we first assume that the MNIST has 10 modes, representing connected
component in the data manifold, associated with 10 digit classes.
We then also perform an extensive grid search of different hyperparameter
configurations, wherein our two regularized constants $\alpha,\beta$
in Eq.~(\ref{eq:D2GAN_loss}) are varied in \{0.01, 0.05, 0.1, 0.2\}.
For a fair comparison, we use the same parameter ranges and fully
connected layers for our network (c.f. the supplementary material
for more details), and adopt results of GAN and mode regularized GAN
(Reg-GAN) from \cite{che2016mode}.

For evaluation, we first train a simple, yet effective 3-layer convolutional
nets\footnote{Network architecture is similar to \href{https://github.com/fchollet/keras/blob/master/examples/mnist_cnn.py}{https://github.com/fchollet/keras/blob/master/examples/mnist\_{}cnn.py}.}
that can obtain 0.65\% error on MNIST testing set, and then employ
it to predict the label probabilities and compute MODE scores for
generated samples. Fig.~\ref{fig:exp_mnist_mode_score} (left) shows
the distributions of MODE scores obtained by three models. Clearly,
our proposed $\model$ significantly outperforms the standard GAN
and Reg-GAN by achieving scores mostly around the maximum {[}8.0-9.0{]}.
It is worthy to note that we did not observe substantial differences
in the average MODE scores obtained by varying the network size through
the parameter searching. We here report the result of the minimal
network with the smallest number of layers and hidden units.

To study the effect of $\alpha$ and $\beta$, we inspect the results
obtained by this minimal network with varied $\alpha,\beta$ in Fig.~\ref{fig:exp_mnist_mode_score}
(right). There is a pattern that, given a fixed $\alpha$, our $\model$
obtains better MODE score when increasing $\beta$ to a certain value,
after which the score could significantly decrease.

\vspace{-4mm}
\begin{figure}[h]
\noindent \centering{}\includegraphics[width=0.72\textwidth]{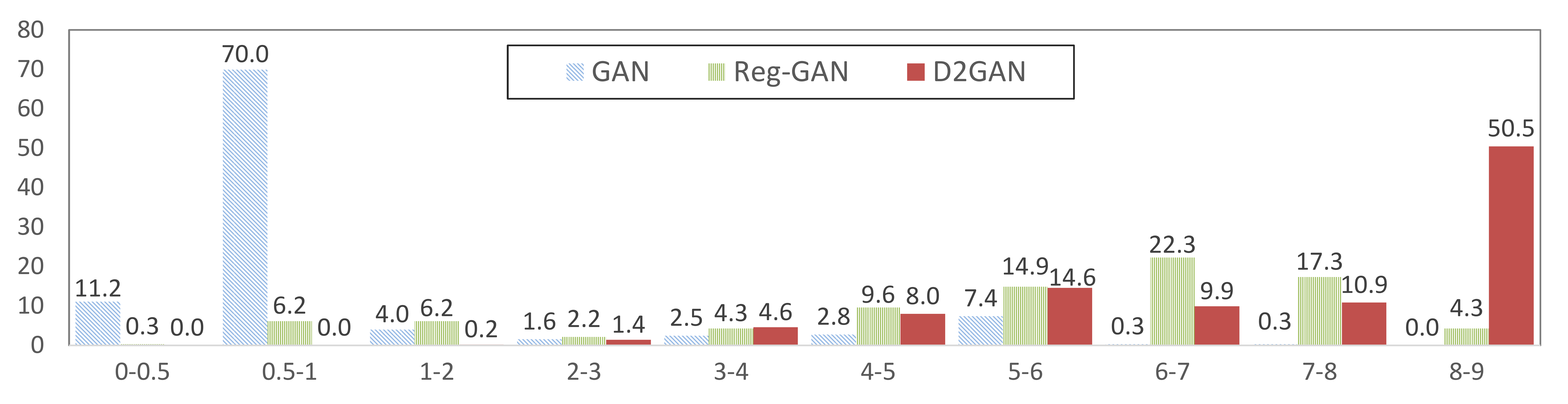}\includegraphics[width=0.28\textwidth]{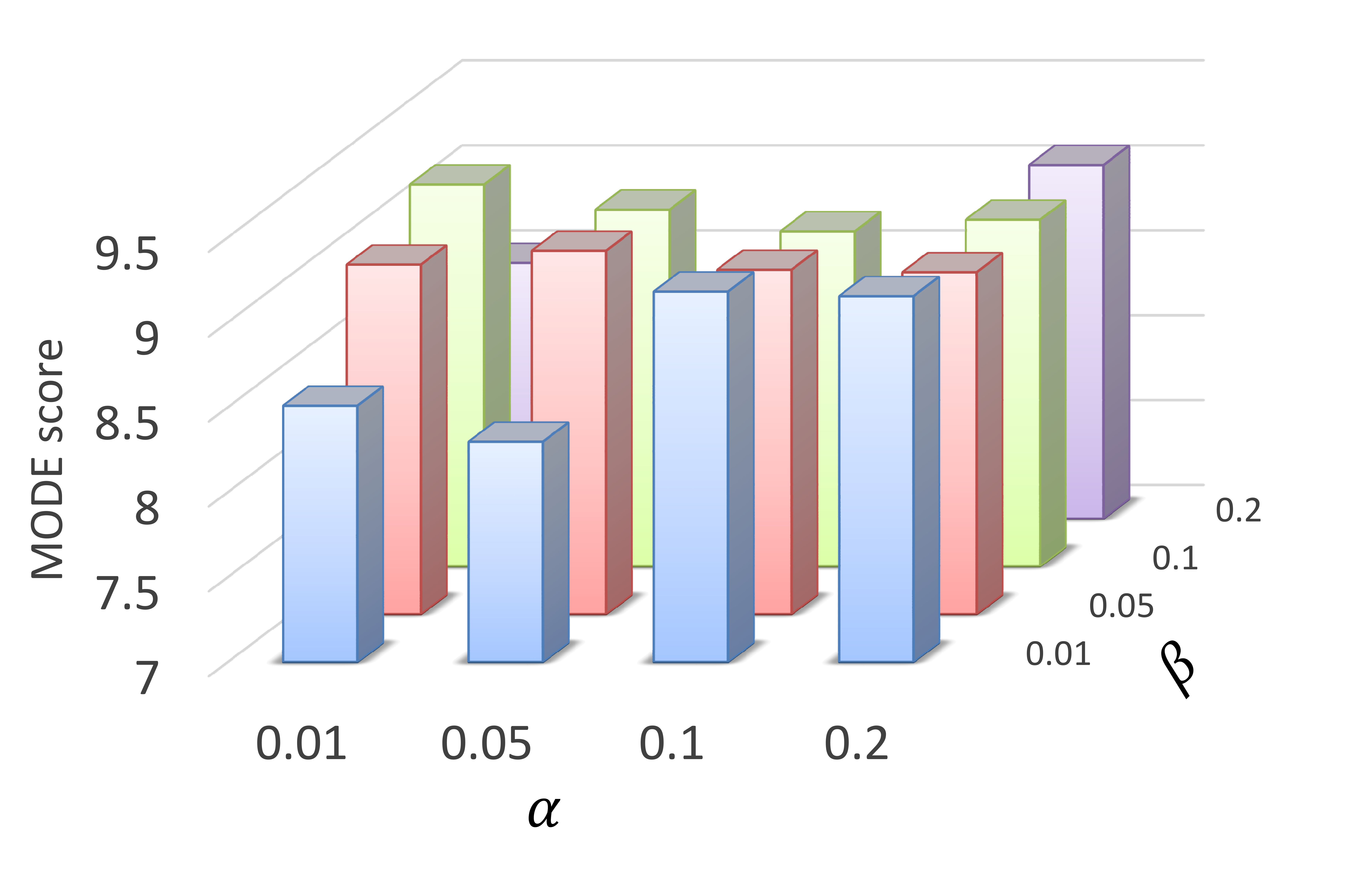}\vspace{-3mm}
\caption{Distributions of MODE scores (left) and average MODE scores (right)
with varied $\alpha$, $\beta$.\label{fig:exp_mnist_mode_score}}
\end{figure}
\vspace{-4mm}

\paragraph{MNIST-1K.}

The standard MNIST data with 10-mode assumption seems to be fairly
trivial. Hence, based on this data, we test our proposed model on
a more challenging one. We continue following the technique used in
\cite{che2016mode,metz2016unrolled} to construct a new 1000-class
MNIST dataset (MNIST-1K) by stacking three randomly selected digits
to form an RGB image with a different digit image in each channel.
The resulting data can be assumed to contain 1,000 distinct modes,
corresponding to the combinations of digits in 3 channels from 000
to 999.

In this experiment, we use a more powerful model with convolutional
layers for discriminators and transposed convolutions for the generator.
We measure the performance by the number of modes for which the model
generated at least one in total 25,600 samples, and the reverse KL
divergence between the model distribution (i.e., the label distribution
predicted by the pretrained MNIST classifier used in the previous
experiment) and the expected data distribution. Tab.~\ref{tab:exp_mnist1k}
reports the results of our $\model$ compared with those of GAN, UnrolledGAN
taken from \cite{metz2016unrolled}, DCGAN and Reg-GAN from \cite{che2016mode}.
Our proposed method again clearly demonstrates the superiority over
baselines by covering all modes and achieving the best distance that
is close to zero.

\vspace{-2mm}
\begin{table}[h]
\noindent \centering{}\caption{Numbers of modes covered and reverse KL divergence between model and
data distributions.\label{tab:exp_mnist1k}}
\resizebox{0.99\textwidth}{!}{
\begin{tabular}{crrrrr}
\hline 
Model & GAN \cite{metz2016unrolled} & UnrolledGAN \cite{metz2016unrolled} & DCGAN \cite{che2016mode} & Reg-GAN \cite{che2016mode} & $\model$\tabularnewline
\hline 
\# modes covered & 628.0$\pm$140.9 & 817.4$\pm$37.9 & 849.6$\pm$62.7 & 955.5$\pm$18.7 & \textbf{1000.0$\pm$0.00}\tabularnewline
$D_{\textrm{KL}}\left(\textrm{model\ensuremath{\parallel\textrm{data}}}\right)$ & 2.58$\pm$0.75 & 1.43$\pm$0.12 & 0.73$\pm$0.09 & 0.64$\pm$0.05 & \textbf{0.08$\pm$0.01}\tabularnewline
\hline 
\end{tabular}}
\end{table}
\vspace{-2mm}

\subsubsection{Natural scene images}

\vspace{-2mm}
We now extend our experiments to investigate the scalability of our
proposed method on much more challenging large-scale image databases
from natural scenes. We use three widely-adopted datasets: CIFAR-10
\cite{krizhevsky_thesis09_learning}, STL-10 \cite{coates_etal_aistats11_analysis}
and ImageNet \cite{ILSVRC15}. CIFAR-10 is a well-studied dataset
of 50,000 32$\times$32 training images of 10 classes: airplane, automobile,
bird, cat, deer, dog, frog, horse, ship, and truck. STL-10, a subset
of ImageNet, contains about 100,000 unlabeled 96$\times$96 images,
which is more diverse than CIFAR-10, but less so than the full ImageNet.
We rescale all images down 3 times and train our networks on 32$\times$32
resolution. ImageNet is a very large database of about 1.2 million
natural images from 1,000 classes, normally used as the most challenging
benchmark to validate the scalability of deep models. We follow the
preprocessing in \cite{krizhevsky_etal_nips12_imagenet}, except subsampling
to 32$\times$32 resolution. We use the code provided in \cite{salimans_etal_nips16_improved}
to compute the Inception score for 10 independent partitions of 50,000
generated samples.

\begin{figure}
\noindent \centering{}%
\begin{minipage}[t]{0.39\textwidth}%
\noindent \begin{center}
\captionof{table}{Inception scores on CIFAR-10.}\label{tab:exp_inception_scores_cifar10}%
\begin{tabular}{lr}
\hline 
Model & Score\tabularnewline
\hline 
Real data & 11.24$\pm$0.16\tabularnewline
\rowcolor{even_color}WGAN \cite{arjovsky_etal_arxiv17_wasserstein_gan} & 3.82$\pm$0.06\tabularnewline
MIX+WGAN \cite{arora2017generalization} & 4.04$\pm$0.07\tabularnewline
\rowcolor{even_color}Improved-GAN \cite{salimans_etal_nips16_improved} & 4.36$\pm$0.04\tabularnewline
ALI \cite{dumoulin2016adversarially} & 5.34\textbf{\emph{$\pm$}}0.05\tabularnewline
\rowcolor{even_color}BEGAN \cite{berthelot2017began} & 5.62~~~~~~~~~~~\tabularnewline
MAGAN \cite{wang2017magan} & 5.67~~~~~~~~~~~\tabularnewline
\rowcolor{even_color}DCGAN \cite{radford_etal_iclr15_unsupervised} & 6.40$\pm$0.05\tabularnewline
DFM \cite{warde2017improving} & 7.72$\pm$0.13\tabularnewline
\rowcolor{even_color}\textbf{$\model$} & \textbf{7.15$\pm$0.07}\tabularnewline
\hline 
\end{tabular}
\par\end{center}%
\end{minipage}\hfill{}%
\begin{minipage}[t]{0.58\textwidth}%
\noindent \begin{center}
\vspace{-2mm}
\includegraphics[width=1\textwidth]{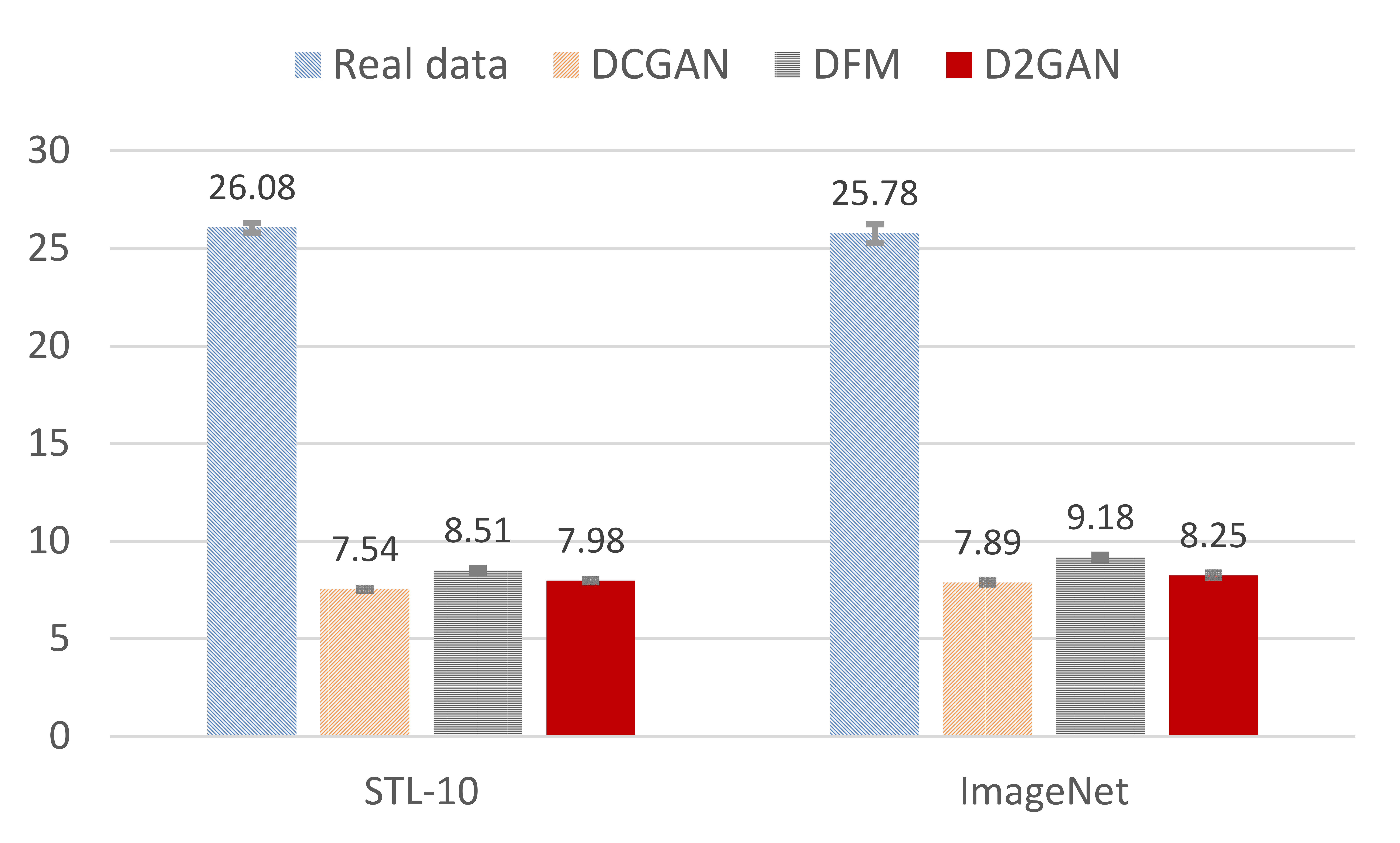}\vspace{-3mm}
\caption{Inception scores on STL-10 and ImageNet.\label{fig:exp_inception_scores_stl10_imagenet}}

\par\end{center}%
\end{minipage}\vspace{-6mm}
\end{figure}

Tab.~\ref{tab:exp_inception_scores_cifar10} and Fig.~\ref{fig:exp_inception_scores_stl10_imagenet}
show the Inception scores on CIFAR-10, STL-10 and ImageNet datasets
obtained by our model and baselines collected from recent work in
literature. It is worthy to note that we only compare with methods
trained in a completely unsupervised manner without label information.
As the result, there exist 8 baselines on CIFAR-10 whilst only DCGAN
\cite{radford_etal_iclr15_unsupervised} and denoising feature matching
(DFM) \cite{warde2017improving} are available on STL-10 and ImageNet.
We use our own TensorFlow implementation of DCGAN with the same network
architecture with our model for fair comparisons. In all 3 experiments,
the $\model$ fails to beat the DFM, but outperforms other baselines
by large margins. The lower results compared with DFM suggest that
using autoencoders for matching high-level features appears to be
an effective way to encourage the diversity. This technique is compatible
with our method, thus integrating it could be a promising avenue for
our future work.

Finally, we show several samples generated by our proposed model trained
on these three datasets in Fig.~\ref{fig:exp_samples}. Samples are
fair random draws, not cherry-picked. It can be seen that our $\model$
is able to produce visually recognizable images of cars, trucks, boats,
horses on CIFAR-10. The objects are getting harder to recognize, but
the shapes of airplanes, cars, trucks and animals still can be identified
on STL-10, and images with various backgrounds such as sky, underwater,
mountain, forest are shown on ImageNet. This confirms the diversity
of samples generated by our model.\vspace{-6mm}

\begin{figure}[h]
\noindent \begin{centering}
\subfloat[CIFAR-10.]{\noindent \centering{}\includegraphics[width=0.32\textwidth]{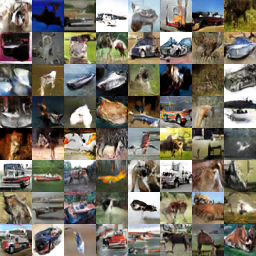}}\hfill{}\subfloat[STL-10.]{\noindent \centering{}\includegraphics[width=0.32\textwidth]{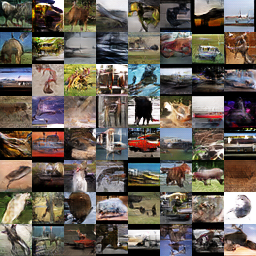}}\hfill{}\subfloat[ImageNet.]{\noindent \centering{}\includegraphics[width=0.32\textwidth]{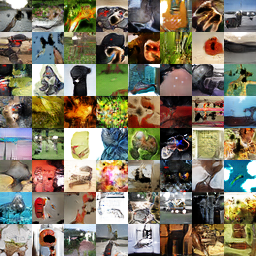}}\vspace{-2mm}

\par\end{centering}

\noindent \begin{centering}
\caption{Samples generated by our proposed $\protect\model$ trained on natural
image datasets. Due to the space limit, please refer to the supplementary
material for larger plot.\label{fig:exp_samples}}

\par\end{centering}

\noindent \centering{}
\end{figure}
\vspace{-6mm}

\section{Conclusion}

\vspace{-3mm}
To summarize, we have introduced a novel approach to combine Kullback-Leibler
(KL) and reverse KL divergences in a unified objective function of
the density estimation problem. Our idea is to exploit the complementary
statistical properties of two divergences to improve both the quality
and diversity of samples generated from the estimator. To that end,
we propose a novel framework based on generative adversarial nets
(GANs), which formulates a minimax game of three players: two discriminators
and one generator, thus termed \emph{dual discriminator GAN} ($\model$).
Given two discriminators fixed, the learning of generator moves towards
optimizing both KL and reverse KL divergences simultaneously, and
thus can help avoid mode collapse, a notorious drawback of GANs.

We have established extensive experiments to demonstrate the effectiveness
and scalability of our proposed approach using synthetic and large-scale
real-world datasets. Compared with the latest state-of-the-art baselines,
our model is more scalable, can be trained on the large-scale ImageNet
dataset, and obtains Inception scores lower than those of the combination
of denoising autoencoder and GAN (DFM), but significantly higher than
the others. Finally, we note that our method is orthogonal and could
integrate techniques in those baselines such as semi-supervised learning
\cite{salimans_etal_nips16_improved}, conditional architectures \cite{mirza2014conditional,denton_etal_nips15_deep,reed_etal_icml16_generative}
and autoencoder \cite{che2016mode,warde2017improving}.
\pagebreak{}

\bibliographystyle{plain}

\end{document}